\newtheorem{mydef}{\bf Definition}[section]
\newtheorem{thm}{\bf Theorem}[section]
\newcommand{\expect}{\mathbb{E}}
\definecolor{Brown}{rgb}{0.8, 0.3, 0.25}
\journal{Journal of \LaTeX\ Templates}
\definecolor{mydarkblue}{rgb}{0,0.08,0.45}
\definecolor{mydarkgreen}{RGB}{0, 139, 69}
\begin{document}

\begin{frontmatter}

\title{Consistent Attack: Universal Adversarial Perturbation on Embodied Vision Navigation}



\author[1]{Chengyang Ying\fnref{fn1}}\ead{ycy21@mails.tsinghua.edu.cn}
\author[1]{You Qiaoben\fnref{fn1}}\ead{qby17@mails.tsinghua.edu.cn}
\author[1]{Xinning Zhou\fnref{fn1}}\ead{zxn21@mails.tsinghua.edu.cn}
\author[1,2]{Hang Su\corref{cor1}}\ead{suhangss@mail.tsinghua.edu.cn}
\author[3]{Wenbo Ding}\ead{dingwenbo@saicmotor.com}
\author[3]{Jianyong Ai}\ead{aijianyong@saicmotor.com}

\fntext[fn1]{Equal Contribution}
\cortext[cor1]{Corresponding author}


\address[1]{Department of Computer Science and Technology, Beijing National Research Center for Information Science and Technology, Tsinghua-Bosch Joint Center for Machine Learning, Institute for Artificial Intelligence,
Tsinghua University, Beijing 100084, China}
\address[2]{Peng Cheng Laboratory, Shenzhen, Guangdong, 518055, China}
\address[3]{SAIC AI Lab}





\begin{abstract}
Embodied agents in vision navigation coupled with deep neural networks have attracted increasing attention. However, deep neural networks have been shown vulnerable to malicious adversarial noises, which may potentially cause catastrophic failures in Embodied Vision Navigation.
Among different adversarial noises, universal adversarial perturbations (UAP), i.e., a constant image-agnostic perturbation applied on every input frame of the agent, play a critical role in Embodied Vision Navigation since they are computation-efficient and application-practical during the attack.
However, existing UAP methods ignore the system dynamics of Embodied Vision Navigation and might be sub-optimal.
In order to extend UAP to the sequential decision setting, we formulate the disturbed environment under the universal noise $\delta$, as a $\delta$-disturbed Markov Decision Process ($\delta$-MDP). Based on the formulation, we analyze the properties of $\delta$-MDP and propose two novel Consistent Attack methods, named Reward UAP and Trajectory UAP, for attacking Embodied agents, which consider the dynamic of the MDP and calculate universal noises by estimating the disturbed distribution and the disturbed Q function.
For various victim models, our Consistent Attack can cause a significant drop in their performance in the PointGoal task in Habitat with different datasets and different scenes. Extensive experimental results indicate that there exist serious potential risks for applying Embodied Vision Navigation methods to the real world.
\end{abstract}

\begin{keyword}
\emph{Keywords:}\\

Embodied Agent\\

Vision Navigation\\

Deep Neural Networks \\

Universal Adversarial Noise
\end{keyword}

\end{frontmatter}


\begin{figure}[t]
    \centering
    \includegraphics[scale=0.28]{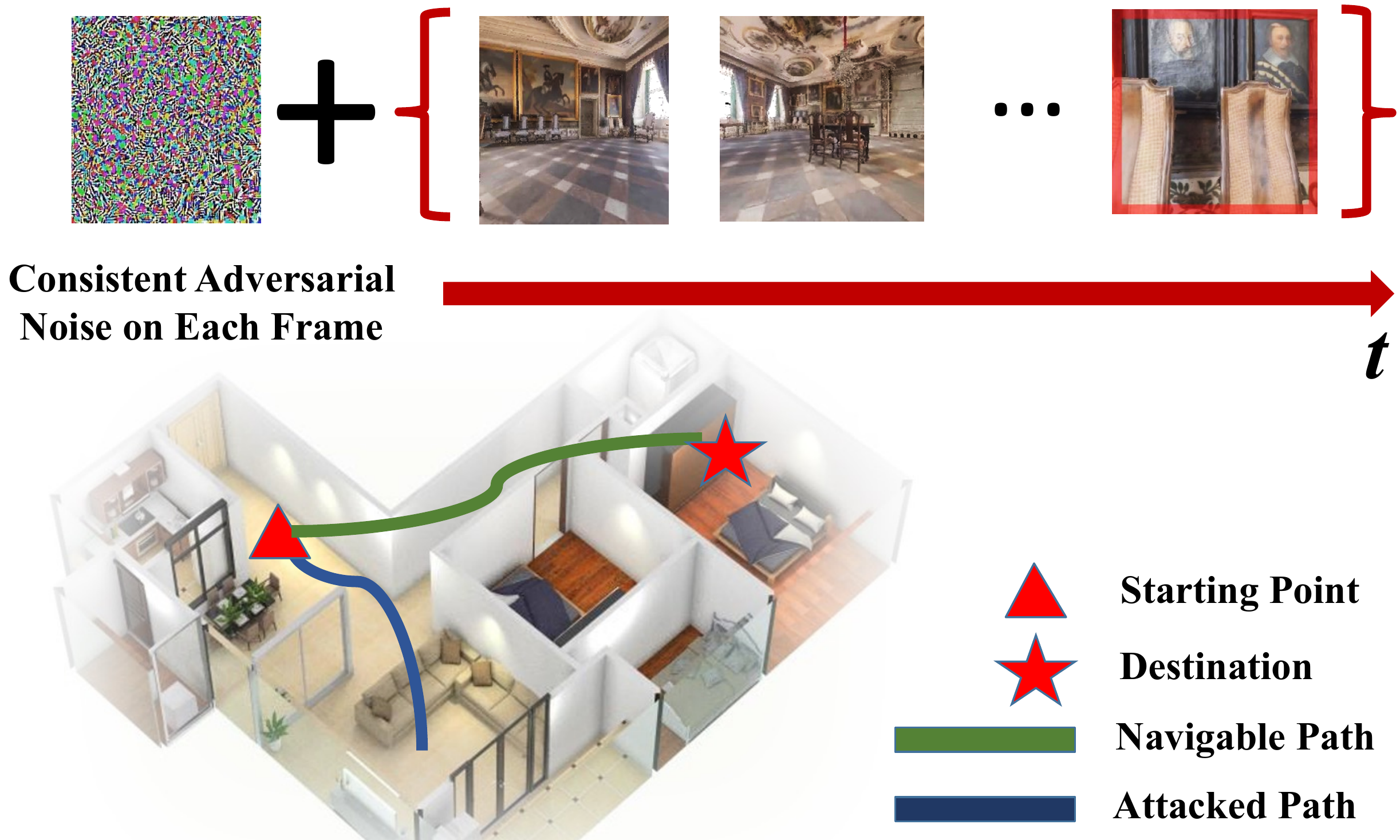}
    \vspace{-1em}
    \caption{An illustration of universal adversarial perturbation on observations of an agent in Embodied Vision Navigation. In this task, the agent needs to navigate from the \textbf{red triangle} to the \textbf{red star}. The \textbf{green curve} is the navigable path. At each timestep, the adversary adds a consistent noise to the input frame. Finally, the adversary misleads the agent to take the \textbf{blue curve}.}
    \label{fig:problem_setting}
    \vspace{-1em}
\end{figure}

\section{Introduction}
\label{sec-intro}
Embodied Vision Navigation, aiming at reaching a given point or object with sequential vision inputs, is a core component of Embodied Artificial Intelligence (Embodied AI) and has gradually attracted the attention of researchers~\cite{hermann2017grounded,anderson2018evaluation}.
Coupled with deep neural networks, Embodied Vision Navigation agents combined with techniques of vision signal processing and sequential decision have made sustained progress~\cite{anderson2018evaluation,chaplot2020object,ye2021hierarchical}. 
Recently, the vulnerability of deep neural networks has been well studied~\cite{biggio2013evasion,szegedy2013intriguing,goodfellow2014explaining}, while there is little attention focusing on the robustness of Embodied Vision Navigation agents. Therefore, it is important to address this problem before we can deploy Embodied Vision Navigation agents to the real world.

In the setting of Embodied Vision Navigation, it is impractical to hack the system and generate specific attacks at every timestep, and thus adding a universal adversarial perturbation (UAP)~\cite{moosavi2017universal} to every observation is a more reasonable setting than calculating observation-wise noises. Moreover, UAP is more feasible to be deployed, e.g. sticking a patch on the sensor can generate universal perturbation.
Also, current adversarial attack methods usually require complex computation to generate noises and are thus slow and expensive to be deployed~\cite{Huang2017Adversarial,zhang2020robust}. Therefore, as illustrated in Fig.~\ref{fig:problem_setting}, our goal is to study UAP, which is one of the most cost-effective adversarial attacks, against Embodied agents. We aim to find the optimal universal noise $\delta^*$ to add to every observation, which can mislead the victim and prevent it from reaching the goal.

Although there is an array of UAP methods~\cite{moosavi2017universal,khrulkov2018art,zhang2020understanding} that have achieved promising results in image classification, the potential risk of UAP in Embodied Vision Navigation has been rarely investigated. Moreover, existing UAP methods consider the setting that all samples are i.i.d sampled from the same distribution and are independent of each other, which no longer holds since Embodied Vision Navigation is a Markov Decision Process (MDP) and the sequential frames are related to each other. Consequently, directly applying previous UAP methods to Embodied Vision Navigation is not the optimal choice.


To handle this gap, in this paper, we first introduce $\delta$-Markov Decision Process ($\delta$-MDP) to formulate the persistent effect of the Universal Adversarial Perturbation, based on which we propose Disturbed Policy Gradient Theorem (Theorem~\ref{thm-2}) to calculate the gradient of the disturbance $\delta$ via the disturbed Q function. According to the result, we need to sample state-action pairs from the disturbed distribution and estimate the disturbed Q function for calculating the gradient. Furthermore, we propose two novel Consistent Attack methods named Reward UAP and Trajectory UAP for calculating the optimal universal adversarial perturbation of Embodied Vision Navigation agents. In Reward UAP, we first propose a multi-step optimization framework to ensure trajectories in every step are sampled from the disturbed policy, and then we use two different ways to estimate the disturbed Q function. In Trajectory UAP, we consider Embodied Vision Navigation as a goal-condition problem and just use the goal signal to optimize our universal adversarial perturbation, which is a more practical assumption since the adversary can easily know whether the agent reaches the goal rather than the actual reward signal of the agent.


Extensive experimental results suggest that our Consistent Attack can significantly reduce the performance of existing Embodied Vision Navigation methods in Habitat~\cite{savva2019habitat}, which shows the effectiveness of our method and the serious vulnerability of existing Embodied Vision Navigation methods. 


\section{Related Work}

\label{sec-related}
\subsection{Embodied Vision Navigation}




Since the agent in Embodied Vision Navigation needs to actively interact with the environment~\cite{beattie2016deepmind,kolve2017ai2,savva2019habitat}, its safety requirement is more important compared with traditional AI tasks, e.g., image classification, sentence generation, and audio recognition. For example, in the PointGoal navigation task~\cite{anderson2018evaluation} in Habitat, the agent tries to find a path to reach the destination, while it might suffer from catastrophic failure when collides along the way~\cite{savva2019habitat}. 
In this paper, we mainly focus on the security of the agent~\cite{savva2019habitat} in the PointGoal task, which is a common and popular baseline in Embodied AI.

There are many works considering the robustness of models in different computer vision tasks~\cite{hu2022adversarial,Hua2019DefendingAA,yang2021multiple}, like face recognition~\cite{Hu2022ProtectingFP,Yang2020TowardsFE} and face forgery detection~\cite{Gandhi2020AdversarialPF,Neekhara2020AdversarialDE,Zhang2021DetectingDV}. However, a major difference between these tasks and Embodied Vision Navigation is whether the inputs are independent and identically distributed.

\subsection{Adversarial Attacks}
\label{sec-2-2}

As a special adversarial attack method in image classification, universal adversarial perturbation (UAP) aims to fool the classifier for most images via a constant noise~\cite{zhang2021survey,moosavi2017universal,khrulkov2018art,zhang2020understanding}. However, existing UAP methods are limited to datasets where images are mutually independent and are thus sub-optimal for Embodied Vision Navigation since it fails to explore the dependency between different images in the same trajectory.


After~\cite{Huang2017Adversarial} first proposed to use FGSM-based adversarial noises to attack policies with deep neural networks, there are an array of works focusing on the vulnerability of deep reinforcement learning (DRL) agents.~\cite{zhang2020robust} proposed a framework named state-adversarial Markov decision process (SA-MDP) for analyzing the adversarial disturbance on state observation in reinforcement learning (RL). 
Recently,~\cite{qiaoben2021understanding} summarized existing adversarial attack methods on observations in RL and proposed a novel two-stage adversarial attack method. However, these adversarial attack methods generate different disturbances at each timestep, which are impracticable in Embodied Vision Navigation. In general, it is nontrivial to calculate the adversarial noise and hack the system at each timestep when the agent is interacting with the environment.

\section{Preliminaries}
In this section, we start with the notations and then introduce the direct application of Universal Adversarial perturbation (UAP)~\cite{moosavi2017universal} on Embodied Vision Navigation.

\subsection{Notations}
\label{section: Notation of UAP}

\paragraph{Markov Decision Process} Following previous works on Embodied Vision Navigation~\cite{chaplot2020object,ye2021hierarchical,gupta2021embodied}, we formulate it as an agent with the policy $\pi$ interacting with the environment, which is a MDP $\mathcal{M} = \{\mathcal{S}, \mathcal{A}, \mathcal{P}, \mathcal{R}, \gamma\}$. Here $\mathcal{S}$ and $\mathcal{A}$ represent the state space and the action space, respectively. At each timestep $t$, the Embodied Vision Navigation agent is at the state $s_t$ and will choose an action $a_t$ from the action space $\mathcal{A}$ via its policy $\pi$. Then the agent will receive an immediate reward $\mathcal{R}(s_t, a_t)$ and move to the next state $s_{t+1}\sim\mathcal{P}(\cdot|s_t, a_t)$. The goal of Embodied Vision Navigation is to find the optimal policy to maximize the expected cumulative reward as
\begin{equation}
    J(\pi) \triangleq \mathbb{E}\left[\sum_{t=0}^{\infty}\gamma^t\mathcal{R}(s_t, a_t)\right],
\end{equation}
where $\gamma$ is the discount factor.

\paragraph{Universal Adversarial perturbation} As widely studied in image classification tasks, UAP~\cite{moosavi2017universal} aims to utilize the same noise $\delta$ to mislead the classifier, which is more computationally efficient and easier to be deployed. In other words, given images obeying the distribution $\mu$ and the victim classifier $f$, the goal is to find the optimal disturbance $\delta^*\in \mathcal{B}_p(0,\epsilon)$ satisfying
\begin{equation}
    \delta^* = \mathop{\arg\min}_{\delta\in\mathcal{B}_p(0,\epsilon)}P_{s\sim\mu}\left\{f(s+\delta)= f(s)\right\},
\end{equation}
where $\mathcal{B}_p(0,\epsilon)$ is the ball centered on $0$ with radius $\epsilon$ in $l_p$-norm corresponding to the feasible region of UAP.

\subsection{UAP in MDP}
\label{sec: baseline uap}

Based on the definition of UAP mentioned in Sec.~\ref{section: Notation of UAP}, we can naively extend UAP to tasks with MDP settings, like Embodied Vision Navigation. Since the policy in MDP might be deterministic or stochastic, we discuss how to apply UAP in these two cases separately. 

When the policy is deterministic, the adversary will collect observations by the victim policy $\pi$ and generate the perturbation $\delta$ with the objective as
\begin{equation}
\label{eq:deter}
    \delta^* = \mathop{\arg\min}_{\delta\in\mathcal{B}_p(0,\epsilon)}P_{s\sim d^\pi(s)}\left\{\pi(s+\delta) =  \pi(s)\right\},
\end{equation}
where $d^\pi(s)$ is the distribution of the state.

When the victim policy is stochastic (e.g., in Proximal Policy Optimization~\cite{schulman2017proximal}), similar to adversarial attack methods in DRL~\cite{Huang2017Adversarial}, we aim at minimizing the probability to take the optimal action, i.e.,
\begin{equation}
\begin{split}
\label{eq:stochastic}
    &\delta^* = \mathop{\arg\min}_{\delta\in\mathcal{B}_p(0,\epsilon)}\expect_{s\sim d^\pi(s)}[\pi(a|s+\delta)],\\
    &\text{where}~a = \mathop{\arg\max}_{a' \in \mathcal{A}} \pi(a'|s).
\end{split}
\end{equation}
In the rest of this paper, we will mainly focus on the stochastic agent~\cite{savva2019habitat} since the deterministic policy is a degenerate case of the stochastic policy, i.e, when the variance of the stochastic agent goes to zero, problem~(\ref{eq:stochastic}) is equivalent to problem~(\ref{eq:deter}).

However, as mentioned in Sec.~\ref{sec-intro}, current UAP does not consider the transition dynamics of the MDP and the sequential dependence of states within a trajectory. We will address those problems in the discussion of our method in Sec.~\ref{method-sec}.

\section{Methodology}
\label{method-sec}
To analyze UAP in MDP, we first introduce $\delta$-MDP to formulate the universal noise's effects. Based on our further analysis of properties of $\delta$-MDP, we propose two novel Consistent Attack methods named Reward UAP and Trajectory UAP.

\subsection{$\delta$-Markov Decision Process}
\label{sec:delta}
To extend UAP to the sequential decision problem, we first formulate it as $\delta$-disturbed Markov Decision Process ($\delta$-MDP), in which the observation in every timestep is disturbed by the same adversarial noise $\delta$.
\begin{mydef}[\textbf{$\delta$-MDP}]
$\delta$-MDP consists of a six-tuple $\mathcal{M}_{\delta} = (\mathcal{S}, \mathcal{A}, \mathcal{R}, \mathcal{P}, \gamma, \delta)$. Here $\mathcal{S}, \mathcal{A}, \mathcal{R}, \mathcal{P}, \gamma$ is the same as MDP and $\delta$ satisfies that $s+\delta\in\mathcal{S}$ holds for $\forall s\in\mathcal{S}$. For any fixed policy $\pi$, when its true state $s\in\mathcal{S}$ is disturbed as $s+\delta$, the agent takes action $a\sim\pi(\cdot|s+\delta)$, arrives at the next state $s'\sim\mathcal{P}(\cdot|s, a)$, and receives the current reward $\mathcal{R}(s, a)$. 
\end{mydef}
For simplicity, we define that $\pi_{\delta}(\cdot|s) = \pi(\cdot|s+\delta)$. It is easy to check that the policy $\pi$ interacting with $\mathcal{M}_{\delta}$ is equivalent to the disturbed policy $\pi_{\delta}$ interacting with $\mathcal{M}$. For any victim policy $\pi$ in $\delta$-MDP, we can define its disturbed cumulative return $J_{\delta}(\pi)$ as
\begin{equation}
    J_{\delta}(\pi) = \mathbb{E}_{s_t, a_t\sim \pi_{\delta}, \mathcal{P}} [ \sum_{t=0}^{\infty}\gamma^t \mathcal{R}(s_t, a_t)].
\end{equation}

The purpose of Consistent Attack for Embodied Vision Navigation is different from that of UAP in image classification, while the latter misleads the neural network to classify most images incorrectly, the former aims to minimize the disturbed cumulative return. To find the best disturbance $\delta^*$ within the disturbance region, our optimization objective is
\begin{equation}
\label{eq-objective}
    \delta^* = \mathop{\arg\min}_{\delta\in\mathcal{B}_p(0,\epsilon)} J_{\delta}(\pi).
\end{equation}

For solving the objective~\ref{eq-objective}, we first analyze some properties of our $\delta$-MDP. Similar to the standard MDP~\cite{sutton2018reinforcement}, we can define the discount future state distribution as $d^{\pi}_{\delta}(s) = (1-\gamma)\sum_{t=0}^{\infty} \gamma^t P_{\delta}(s_t = s)$ and equivalently deform $J_{\delta}(\pi)$ as
\begin{equation}
    J_{\delta}(\pi) = \frac{1}{1-\gamma}\mathbb{E}_{s\sim d^{\pi}_{\delta}(\cdot), a\sim \pi(\cdot|s)}  \mathcal{R}(s, a).
\end{equation}

Also, we can define the $\delta$-disturbed Q function and the $\delta$-disturbed value function as below
\begin{mydef} [\textbf{$\delta$-disturbed Q-function and value function}]
$\delta$-disturbed Q function is defined as
\begin{equation}
    Q_{\delta}^{\pi}(s, a) = \mathbb{E}_{s_t, a_t\sim \pi_{\delta}, \mathcal{P}}\left[ \sum_{t=0}^{\infty}\gamma^t \mathcal{R}(s_t, a_t)\Big{|}s_0=s, a_0=a\right],
\end{equation}
and the $\delta$-disturbed value function is defined as
\begin{equation}
    V_{\delta}^{\pi}(s) = \mathbb{E}_{s_t, a_t\sim \pi_{\delta}, \mathcal{P}}\left[ \sum_{t=0}^{\infty}\gamma^t \mathcal{R}(s_t, a_t)\Big{|}s_0=s\right].
\end{equation}
\end{mydef}
Similar to the Bellman equation in MDP, 
the $\delta$-disturbed Q function and the $\delta$-disturbed value function of our $\delta$-MDP also satisfy the disturbed Bellman equation as below
\begin{thm}[Disturbed Bellman equation for $\delta$ and $\pi$]
\label{thm-1}
$V_{\delta}^{\pi}$ and $Q_{\delta}^{\pi}$ satisfy the disturbed Bellman equation as below
\begin{equation}
\begin{split}
    &V_{\delta}^{\pi}(s) = \mathbb{E}_{a,s'\sim\pi_{\delta},\mathcal{P}}\left[\mathcal{R}(s,a) + \gamma V_{\delta}^{\pi}(s')\right] \\
    &Q_{\delta}^{\pi}(s,a) = \mathcal{R}(s, a) + \gamma \mathbb{E}_{s',a'\sim\mathcal{P},\pi_{\delta}} Q_{\delta}^{\pi}(s',a').
\end{split}
\end{equation}
\end{thm}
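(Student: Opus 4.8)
The plan is to reduce the claim to the classical Bellman equation by exploiting the equivalence, already noted above, between the policy $\pi$ acting in $\mathcal{M}_{\delta}$ and the surrogate policy $\pi_{\delta}$ acting in the undisturbed MDP $\mathcal{M}$. Concretely, I would first observe that under this equivalence $V_{\delta}^{\pi}$ and $Q_{\delta}^{\pi}$ are literally the ordinary value and Q functions of $\pi_{\delta}$ in $\mathcal{M}$: the transition kernel $\mathcal{P}$ and reward $\mathcal{R}$ are unchanged, and only the action is drawn from $\pi_{\delta}(\cdot\mid s)=\pi(\cdot\mid s+\delta)$. The standard Bellman identities for $\pi_{\delta}$ then read exactly as the two displayed equations, so in principle the theorem is immediate once this identification is made precise.

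For a self-contained argument I would instead unroll the definitions directly. Starting from $Q_{\delta}^{\pi}(s,a)=\mathbb{E}_{s_t,a_t\sim\pi_{\delta},\mathcal{P}}[\sum_{t\ge 0}\gamma^t\mathcal{R}(s_t,a_t)\mid s_0=s,a_0=a]$, I would peel off the $t=0$ term, which is the deterministic quantity $\mathcal{R}(s,a)$, and rewrite the remainder as $\gamma\sum_{t\ge 0}\gamma^t\mathcal{R}(s_{t+1},a_{t+1})$. Conditioning on $s_1=s'\sim\mathcal{P}(\cdot\mid s,a)$ and $a_1=a'\sim\pi_{\delta}(\cdot\mid s')$ and invoking the Markov property together with time-homogeneity of the dynamics, the inner conditional expectation of the tail is exactly $Q_{\delta}^{\pi}(s',a')$; taking the outer expectation over $(s',a')$ yields the second identity. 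The first identity then follows either by the same unrolling starting from $V_{\delta}^{\pi}$, or simply by combining $V_{\delta}^{\pi}(s)=\mathbb{E}_{a\sim\pi_{\delta}(\cdot\mid s)}Q_{\delta}^{\pi}(s,a)$ with the $Q$ recursion and the definition of $\pi_{\delta}$.

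The only technical point requiring care — and the one I would treat as the main obstacle — is justifying the interchange of the infinite discounted sum with the expectation, and the legitimacy of the conditioning (tower) steps. This is routine once one assumes the rewards are uniformly bounded, $|\mathcal{R}|\le R_{\max}<\infty$, with $\gamma\in[0,1)$, which is the standing assumption implicit in the very definition of $J_{\delta}(\pi)$: then $\sum_{t\ge 0}\gamma^t|\mathcal{R}(s_t,a_t)|\le R_{\max}/(1-\gamma)$ almost surely, so dominated convergence and Fubini apply and all the rearrangements are valid. I would state this boundedness hypothesis explicitly at the start of the proof; everything else is a verbatim transcription of the classical Bellman argument with $\pi$ replaced by $\pi_{\delta}$.
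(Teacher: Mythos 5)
Your proposal is correct and its first paragraph is exactly the paper's own argument: the paper proves Theorem~\ref{thm-1} in one line by noting that $\pi$ interacting with $\mathcal{M}_{\delta}$ is equivalent to $\pi_{\delta}(\cdot|s)=\pi(\cdot|s+\delta)$ interacting with $\mathcal{M}$ and invoking the standard Bellman equation. The additional unrolling argument and the bounded-reward/dominated-convergence remark are sound extra detail beyond what the paper spells out, but not a different route.
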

We can directly prove it by the existing results in MDP since the policy $\pi_{\delta}(\cdot|s)$ is equivalent to $\pi(\cdot|s+\delta)$.

\subsection{Reward UAP and Trajectory UAP}
\label{section: Reward and Trajectory}
In this part, we will propose two novel Consistent Attack methods named Reward UAP and Trajectory UAP to optimize the objective~\ref{eq-objective}. 

It's difficult to directly minimize $J_{\delta}(\pi)$ since it is always non-convex. Consequently, we consider calculating the gradient $\nabla_{\delta}J_{\delta}(\pi)$ and use gradient descent to optimize it. However, we can  directly calculate $\nabla_{s+\delta}\pi(a|s+\delta)$ via backpropagation rather than $\nabla_{\delta}J_{\delta}(\pi)$. To overcome this gap, by extending the Policy Gradient Theorem~\cite{sutton1999policy}, we propose the Disturbed Policy Gradient Theorem as below
\begin{thm}[Disturbed Policy Gradient]
\label{thm-2}
For any policy $\pi$ and $\delta$-MDP $\mathcal{M}_{\delta}$, we can calculate the gradient of $J_{\delta}(\pi)$ to $\delta$ only by calculating the gradient $\nabla_{s+\delta} \pi(a|s+\delta)$, i.e.,
\begin{equation}
\label{prob:final}
\begin{split}
    &\nabla_{\delta} J_{\delta}(\pi)\\
    =& \frac{1}{1-\gamma}\sum_{s} d_{\delta}^{\pi}(s) \sum_a Q_{\delta}^{\pi}(s, a)\nabla_{\delta} \pi(a|s+\delta)\\
    =& \frac{1}{1-\gamma}\mathbb{E}_{s\sim d_{\delta}^{\pi}} \mathbb{E}_{a\sim\pi(\cdot|s+\delta)} \left[Q_{\delta}^{\pi}(s, a)\nabla_{s+\delta} \left[\log\pi(a|s+\delta)\right]\right].
\end{split}
\end{equation}
\end{thm}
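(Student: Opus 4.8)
The plan is to reduce the statement to the classical Policy Gradient Theorem~\cite{sutton1999policy}, leveraging the equivalence noted right after the definition of $\delta$-MDP: running $\pi$ in $\mathcal{M}_{\delta}$ is the same as running the disturbed policy $\pi_{\delta}(\cdot|s)=\pi(\cdot|s+\delta)$ in the undisturbed $\mathcal{M}$. Under this identification $J_{\delta}(\pi)=J(\pi_{\delta})$, the disturbed discounted state distribution $d^{\pi}_{\delta}$ is exactly the ordinary discounted state distribution of $\pi_{\delta}$ in $\mathcal{M}$, and $Q^{\pi}_{\delta}$ is exactly $Q^{\pi_{\delta}}$. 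Since $\mathcal{R}$ and $\mathcal{P}$ belong to $\mathcal{M}$ and carry no dependence on $\delta$, the collection $\{\pi_{\delta}\}_{\delta}$ is simply a $\delta$-parametrized policy family, differentiable in $\delta$ whenever the victim network $\pi$ is differentiable in its input; so the classical theorem applies verbatim with ``parameter'' $\delta$, which yields the first displayed equality after substituting back $\nabla_{\delta}\pi_{\delta}(a|s)=\nabla_{\delta}\pi(a|s+\delta)$.

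Carried out explicitly, I would first differentiate the disturbed Bellman equation of Theorem~\ref{thm-1}. Writing $V^{\pi}_{\delta}(s)=\sum_a\pi(a|s+\delta)Q^{\pi}_{\delta}(s,a)$ and $Q^{\pi}_{\delta}(s,a)=\mathcal{R}(s,a)+\gamma\sum_{s'}\mathcal{P}(s'|s,a)V^{\pi}_{\delta}(s')$, and using that $\mathcal{R},\mathcal{P}$ are $\delta$-free so that $\nabla_{\delta}Q^{\pi}_{\delta}(s,a)=\gamma\sum_{s'}\mathcal{P}(s'|s,a)\nabla_{\delta}V^{\pi}_{\delta}(s')$, I obtain the recursion
\[
\nabla_{\delta}V^{\pi}_{\delta}(s)=\sum_a Q^{\pi}_{\delta}(s,a)\nabla_{\delta}\pi(a|s+\delta)+\gamma\sum_{s'}P_{\delta}(s\!\to\!s')\,\nabla_{\delta}V^{\pi}_{\delta}(s'),
\]
where $P_{\delta}(s\!\to\!s')=\sum_a\pi(a|s+\delta)\mathcal{P}(s'|s,a)$ is the one-step transition under $\pi_{\delta}$. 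Unrolling this over the horizon collects the $t$-step disturbed transition probabilities $P_{\delta}(s_t=s)$; taking the expectation over $s_0$ and resumming the geometric weights with $d^{\pi}_{\delta}(s)=(1-\gamma)\sum_t\gamma^tP_{\delta}(s_t=s)$ gives $\nabla_{\delta}J_{\delta}(\pi)=\tfrac{1}{1-\gamma}\sum_s d^{\pi}_{\delta}(s)\sum_a Q^{\pi}_{\delta}(s,a)\nabla_{\delta}\pi(a|s+\delta)$. The second equality then follows from the log-derivative identity $\nabla_{\delta}\pi(a|s+\delta)=\pi(a|s+\delta)\nabla_{\delta}\log\pi(a|s+\delta)$ together with the chain rule $\nabla_{\delta}\log\pi(a|s+\delta)=\nabla_{s+\delta}\log\pi(a|s+\delta)$ (the Jacobian of $s+\delta$ with respect to $\delta$ is the identity), after which the two sums are rewritten as $\mathbb{E}_{s\sim d^{\pi}_{\delta}}\mathbb{E}_{a\sim\pi(\cdot|s+\delta)}$.

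I expect the only real obstacle to be analytic bookkeeping rather than anything conceptual: justifying the interchange of $\nabla_{\delta}$ with the infinite sum over $t$ and with the expectations (dominated convergence, using $\gamma<1$, boundedness of $\mathcal{R}$, and smoothness and boundedness of $\nabla\pi$), and — the one genuinely load-bearing point — that the disturbance enters only through the policy's argument, so $\mathcal{R}(s,a)$ and $\mathcal{P}(s'|s,a)$ contribute no $\delta$-gradient, which is precisely what the $\delta$-MDP definition is designed to guarantee. If $\mathcal{S}$ is continuous the sums become integrals throughout, but the argument is otherwise unchanged.
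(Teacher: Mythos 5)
Your proposal is correct and follows essentially the same route as the paper: both differentiate the disturbed Bellman equation to obtain the recursion $\nabla_{\delta}V^{\pi}_{\delta}(s)=\sum_a Q^{\pi}_{\delta}(s,a)\nabla_{\delta}\pi(a|s+\delta)+\gamma\sum_{s'}P_{\delta}(s\to s')\nabla_{\delta}V^{\pi}_{\delta}(s')$ and then resolve it via the discounted state distribution $d^{\pi}_{\delta}$, finishing with the log-derivative trick. The only cosmetic difference is that you unroll the recursion over $t$ and resum the geometric weights, whereas the paper invokes the occupancy-measure identity $d_{\delta}^{\pi}(s)-(1-\gamma)\mathcal{P}(s_0=s)=\gamma\sum_{s'}d_{\delta}^{\pi}(s')\sum_{a}\pi(a|s'+\delta)\mathcal{P}(s|s',a)$ to telescope directly; these are equivalent presentations of the same computation.
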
                                               
\begin{proof}
Based on Theorem~\ref{thm-1}, we can first build the connection of $\nabla_{\delta} V_{\delta}^{\pi}(s)$ and $\nabla_{\delta}\pi(a|s+\delta)$ as below
\begin{equation}
\begin{split}
    \nabla_{\delta}V_{\delta}^{\pi}(s)
    =& \nabla_{\delta}\left[\sum_a \pi(a|s+\delta)Q_{\delta}^{\pi}(s,a)\right]\\
    =& \sum_a Q_{\delta}^{\pi}(s,a) \nabla_{\delta}\pi(a|s+\delta)\\
    +& \sum_a \pi(a|s+\delta) \nabla_{\delta} Q_{\delta}^{\pi}(s,a)\\
    =& \sum_a Q_{\delta}^{\pi}(s,a) \nabla_{\delta}\pi(a|s+\delta)\\
    +& \sum_a \pi(a|s+\delta) \gamma\sum_{s'}P(s'|s,a)\nabla_{\delta}V_{\delta}^{\pi}(s').
\end{split}
\end{equation}
Similar to the property of $d^{\pi}$~\cite{ying2022towards}, we can prove that $d_{\delta}^{\pi}(s) - (1-\gamma)\mathcal{P}(s_0=s)=\gamma\sum_{s'}d_{\delta}^{\pi}(s')\sum_{a}\pi(a|s'+\delta)\mathcal{P}(s|s', a)$. This property extends the discount future state distribution $d_{\delta}^{\pi}(s)$ in one step and builds the connection between it and other discount future state distribution $d_{\delta}^{\pi}(s')$.
Based on this property, we can further prove that
\begin{equation}
\begin{split}
    &\sum_s d_{\delta}^{\pi}(s) \nabla_{\delta} V_{\delta}^{\pi}(s)\\ 
    =& \sum_s d_{\delta}^{\pi}(s)\sum_a Q_{\delta}^{\pi}(s,a) \nabla_{\delta}\pi(a|s+\delta)\\
    +& \sum_s d_{\delta}^{\pi}(s)\sum_a \pi(a|s+\delta) \gamma\sum_{s'}P(s'|s,a)\nabla_{\delta}V_{\delta}^{\pi}(s')\\
    =&  \sum_s d_{\delta}^{\pi}(s) \sum_a Q_{\delta}^{\pi}(s,a) \nabla_{\delta}\pi(a|s+\delta)\\
    +& \sum_{s'}\left[d_{\delta}^{\pi}(s') - (1-\gamma)P(s_0=s')\right]\nabla_{\delta}V_{\delta}^{\pi}(s').
\end{split}
\end{equation}
Thus we have
\begin{equation}
\begin{split}
    &\sum_{s'} (1-\gamma)P(s_0=s') \nabla_{\delta}V_{\delta}^{\pi}(s')\\ 
    =&  \sum_s d_{\delta}^{\pi}(s) \sum_a Q_{\delta}^{\pi}(s,a) \nabla_{\delta}\pi(a|s+\delta).
\end{split}
\end{equation}
Consequently, we can get
\begin{equation}
\begin{split}
    &\nabla_{\delta} J_{\delta}(\pi)\\
    =& \sum_{s'} P(s_0=s')\nabla_{\delta}V_{\delta}^{\pi}(s')\\
    =& \frac{1}{1-\gamma}\sum_s d_{\delta}^{\pi}(s) \sum_a Q_{\delta}^{\pi}(s,a) \nabla_{\delta}\pi(a|s+\delta)\\
    =& \frac{1}{1-\gamma}\mathbb{E}_{s\sim d_{\delta}^{\pi}} \mathbb{E}_{a\sim\pi(\cdot|s+\delta)} \left[Q_{\delta}^{\pi}(s, a)\nabla_{s+\delta} \left[\log\pi(a|s+\delta)\right]\right],
\end{split}
\end{equation}
here it is well known that $\nabla_{\delta}[\log \pi(a|s+\delta)] = \nabla_{s+\delta}[\log \pi(a|s+\delta)]$ by the chain rule. Therefore, we have proven this result.
\end{proof}
As shown in Theorem~\ref{thm-2}, the gradient $\nabla_{\delta} J_{\delta}(\pi)$ is based on $\nabla_{\delta} \left[\log\pi(a|s+\delta)\right]$ for each state-action pair $(s,a)$ sampled from the distribution. However, the weight varies among different $(s,a)$ and depends on the disturbed Q function $Q_{\delta}^{\pi}(s_t, a_t)$. This result also theoretically shows that directly applying UAP to MDP, i.e., calculating the gradient of different $(s,a)$ and averaging them, ignores the dynamic connection of different $(s, a)$ and thus is not optimal.

Based on Theorem~\ref{thm-2}, we can naturally calculate the optimal universal perturbation $\delta^*$ to minimize $J_{\delta}(\pi)$ via gradient descent. However, there are still two critical problems for calculating $\nabla_{\delta} J_{\delta}(\pi)$ in practice. i.e., how to sample state-action pair $(s, a)$ from the distribution $d_{\delta}^{\pi}(s)\pi(a|s+\delta)$ and how to estimate the disturbed Q function. To address those problems, we propose two novel Consistent Attack methods: Reward UAP and Trajectory UAP.


\begin{algorithm}[tb]
\caption{Consistent Attack}
\label{al:consistent-uap}
Given the victim policy $\pi$, the disturbance range $\epsilon$, the upper bound on the number of samples $m=nl$, and the learning rate $\alpha$.
\begin{algorithmic}[1] 
    \STATE Initialize universal perturbation $\delta_0^{(0)} = 0$ ($\delta_i^{(j)}$ indicates $\delta_i$ in Reward Attack and $\delta_i^j$ in Trajectory Attack)
    
    \FOR {$k = 0, 1,2,..., n-1$}
    \STATE Random collect a set of $l$ trajectories $\mathcal{D} = \{\tau_i\}_{i=1}^l$ by the disturbed policy \bm{${\pi_{\delta_k}}$}, here trajectory $\tau_i = \{s_1^{i},a_1^{i},r_1^{i},s_2^{i},...\}$ with $g_i$ indicates whether the agent reaches the goal.
    \STATE \textcolor{blue}{\textbf{Option 1: Gradient Update on Reward UAP}\{}
    \STATE \quad Compute the reward-to-go as $\hat R_t^i = \sum_{t' = t}^{T}\gamma^{t'-t}r_{t'}^i$ or use the Q function.
    \STATE \quad Update the disturbance:
    \begin{equation*}
        \delta_{k+1} = \delta_k - \alpha \sum_i\sum_{t=0}^T \textcolor{blue}{\hat R_t^i} \nabla_{\delta_k}\left[\log\pi_{\delta_k}(a_t^{i}|s_t^{i})\right].
    \end{equation*}
    \textcolor{blue}{\}}
    \STATE \textcolor{red}{\textbf{Option 2: Gradient Update on Trajectory UAP}\{}
    \begin{ALC@g}
    \STATE \quad Update the disturbance:
    \begin{equation*}
        \delta_{k+1} = \delta_k - \alpha \sum_i \textcolor{red}{g_i}\sum_{t=0}^T \gamma^{T-t}\nabla_{\delta_k}\left[\log\pi_{\delta_k}(a_t^{i}|s_t^{i})\right].
    \end{equation*}
    \end{ALC@g}
    \textcolor{red}{\}}
    \ENDFOR
    \STATE Project the universal perturbation to the boundary of the disturbance region:
    \begin{equation*}
        \delta_{n} = \epsilon \frac{\delta_{n}}{\|\delta_{n}\|}.
    \end{equation*}
\end{algorithmic}
\end{algorithm}

\textbf{Reward UAP.} First, we propose Reward UAP to handle those two critical problems in UAP mentioned above. 

For sampling state-action pairs from the disturbed distribution, we propose a multi-step optimization strategy. Specifically, at each optimization step $k$, we have current adversarial noise $\delta_k$ and sample a set of trajectories via $\pi_{\delta_k}$. Moreover, we use state-action pairs from these trajectories to optimize $\delta_k$ by using Theorem~\ref{thm-2}.

For estimating the disturbed Q function which is difficult to directly calculate, we propose two kinds of surrogates. First, we can use the return of the trajectory collected by $\pi_{\delta}$ to estimate $Q_{\delta}^{\pi}(s_t, a_t)$, i.e.,
\begin{equation}
    \hat{R}_t \triangleq \sum_{t'=t}^{\infty}\gamma^{t'-t}\mathcal{R}(s_{t'},a_{t'})\approx Q_{\delta}^{\pi}(s_t, a_t).
\end{equation}

Also, we can directly use the victim's Q function to approximate the disturbed Q function, i.e., $\hat{R}_t \triangleq Q^{\pi}(s_t, a_t) \approx Q_{\delta}^{\pi}(s_t, a_t)$. Consequently, in our Reward UAP, at each epoch $k$, we can estimate $\nabla_{\delta_k} J_{\delta_k}(\pi)$ via
\begin{equation}
\begin{split}
    &\nabla_{\delta_k} J_{\delta_k}(\pi)
    \approx \frac{1}{m}\sum_{i=1}^m\sum_{t} \hat{R}_t^i \nabla_{s_t^i+\delta_k}[\log \pi(a_t^i|s_t^i+\delta_k)],
\end{split}
\end{equation}
and use this approximator to update $\delta_k$. The pseudo-code of Reward UAP is in Algorithm~\ref{al:consistent-uap}.



\textbf{Trajectory UAP.} 
Although Reward UAP can find the optimal gradient direction, which is the best direction for reducing the disturbed cumulative return, there are still some practical problems in Reward UAP. First, by Theorem~\ref{thm-2}, we need to estimate $Q_{\delta}^{\pi}(s, a)$ to calculate the gradient $\nabla_{\delta} J_{\delta}(\pi)$. There are two common ways for estimating $Q_{\delta}^{\pi}(s, a)$, i.e., use the return of trajectories to directly estimate it or train a neural network to estimate it. However, the former one may suffer from high variance and the latter one needs to re-train the neural network whenever the noise $\delta$ is changed. Second, in some practical settings like the PointGoal task in Embodied Vision Navigation, the adversary can only determine whether the agent has reached the goal rather than having access to the actual reward signals, which are related to the distance between the agent and the goal in Habitat~\cite{savva2019habitat}.

To handle those challenges, we regard Embodied Vision Navigation as a goal-condition MDP, of which the return solely depends on whether the agent has reached the goal $s_g$, i.e.,
\begin{equation}
\mathcal{R}_{g}(s_t, a_t) = \left\{ \begin{aligned}
1   &\quad ~\text{the agent reaches the goal, i.e. $s_t=s_g$} \\
0   &\quad ~\text{otherwise}. \\
\end{aligned} \right.
\end{equation}

This situation is more practical since we can easily tell whether the agent has reached the goal, and the estimation of the disturbed Q function is more stable. In this case, we can directly calculate the disturbed Q function of state-action pair $(s_t, a_t)$ in a trajectory $(s_0, a_0, s_1, a_1, ..., s_T)$ as
\begin{equation}
Q_{g}(s_t, a_t) = \left\{ \begin{aligned}
\gamma^{T-t}   &\quad ~\text{if $s_T=s_g$} \\
0   &\quad ~\text{otherwise}. \\
\end{aligned} \right.
\end{equation}
Moreover, we introduce $g$ to indicate whether the agent reaches the goal at this trajectory, i.e., $g = 1$ if $s_T=s_g$ and $g = 0$ otherwise. We have $Q_{g}(s_t, a_t) = \gamma^{T-t}g$. Based on the above analyses, we propose Trajectory UAP, of which the pseudo-code is in algorithm~\ref{al:consistent-uap}. In summary, we use Trajectory UAP when we can only know whether the agent successfully reaches the goal and Trajectory UAP is much more stable.

\begin{table*}[h]
\renewcommand\tabcolsep{4.5pt}  
\begin{center}
    \begin{tabular}{ccc|c|ccc|ccc|ccc}
    \toprule
    \multicolumn{4}{c}{Environments} &
    \multicolumn{3}{c}{Habitat-test} &
    \multicolumn{3}{c}{Gibson} & \multicolumn{3}{c}{MP3D}\\
    \hline
    Baseline & Sensors & Adversary& $\eta$ & Rew & Succ & SPL & Rew & Succ & SPL & Rew & Succ & SPL\\
    \midrule
    \multirow{8}*{RL(PPO)} & \multirow{4}*{RGB}& None & - & -0.23 & 0.52 & 0.41 & 4.96 & 0.83 & 0.68 & 4.42 & 0.51 & 0.38 \\ 
    & & UAP & 0.5 & -2.13 & 0.13 & 0.04 & -4.98 & 0.05 & 0.02 & 0.24 & \textbf{0.00} & \textbf{0.00} \\
    & & Reward UAP & 0.5 & -1.20 & 0.16 & 0.07 & \textbf{-5.24} & 0.02 & 0.02 & \textbf{-2.54} & 0.01 & 0.01 \\
    &  & Trajectory UAP & 0.5 & \textbf{-6.64} & \textbf{0.05} & \textbf{0.01} & -1.63 & \textbf{0.01} & \textbf{0.00} & -0.54 & 0.02 & 0.02 \\
    \cline{2-13}
    & \multirow{4}*{Depth} & None & -& 6.73 & 0.93 & 0.81 & 6.63 & 0.92 & 0.84 & 9.53 & 0.80 & 0.67 \\ 
    & & UAP & 0.03 & -0.34 & \textbf{0.48} & 0.39 & 0.09 & 0.44 & 0.30 & -3.74 & 0.04 & 0.03 \\
    &  & Reward UAP & 0.03 & \textbf{-0.75} & 0.49 & \textbf{0.38} & -3.78 & 0.09 & 0.05 & -4.43 & 0.02 & 0.02 \\
    &  & Trajectory UAP & 0.03 & -0.63 & 0.49 & 0.39 & \textbf{-4.23} & \textbf{0.05} & \textbf{0.02} & \textbf{-4.57} & \textbf{0.01} & \textbf{0.01} \\
    \bottomrule
    \end{tabular}
    \end{center}\vspace{-0.3cm}
    \label{table_1}
    \caption{Performance per trajectory of the attacked policy on the PointGoal task tested in the Habitat-test, Gibson, and MP3D datasets under RGB and Depth sensor. We report the average reward, episode success rate, and SPL, where the lower reward, episode success rate, and SPL indicate a stronger adversary.
    }
    \vspace{-0.3cm}
\end{table*}

\section{Experimental Results}
In this section, we conduct extensive experiments to show the effectiveness of our Consistent Attack methods, including Reward UAP and Trajectory UAP, for Embodied Vision Navigation agents with different sensors in Habitat scenes~\cite{savva2019habitat}. 

\begin{figure*}[t]
    \centering
    \subfigure[\centering Victim Policy: 
    Succ = 1.0,\newline 
    SPL = 0.71,
    Reward = 4.28]{
        \includegraphics[width = 0.22\linewidth]{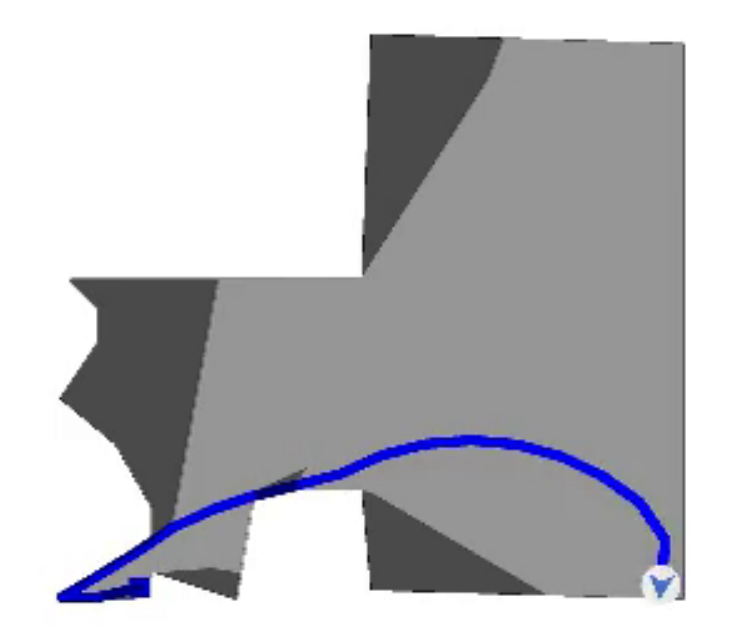}
    }
    \subfigure[\centering Attacked Policy:
    Succ = 0.0,\newline
    SPL = 0.0,
    Reward = -4.78]{
        \includegraphics[width = 0.22\linewidth]{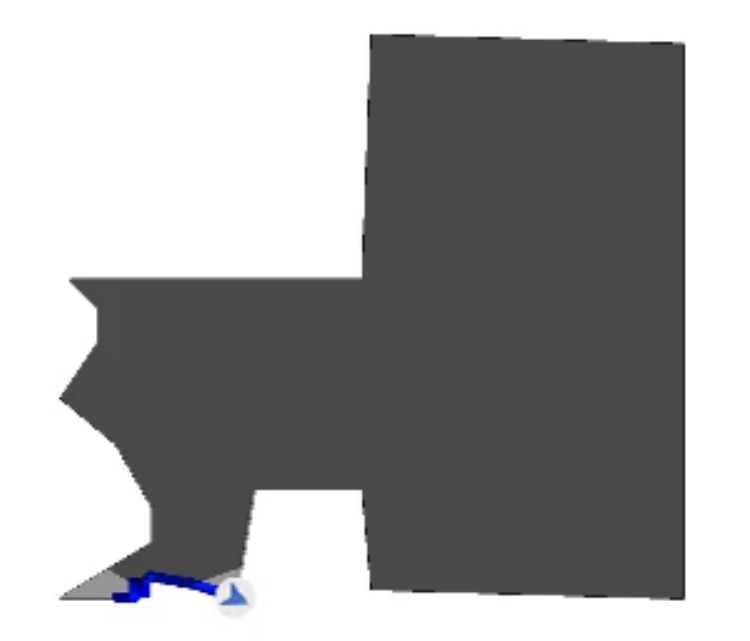}
    }
    \subfigure[\centering Victim Policy:
    Succ = 1.0,\newline
    SPL = 0.31,
    Reward = 3.35]{
        \includegraphics[width = 0.22\linewidth]{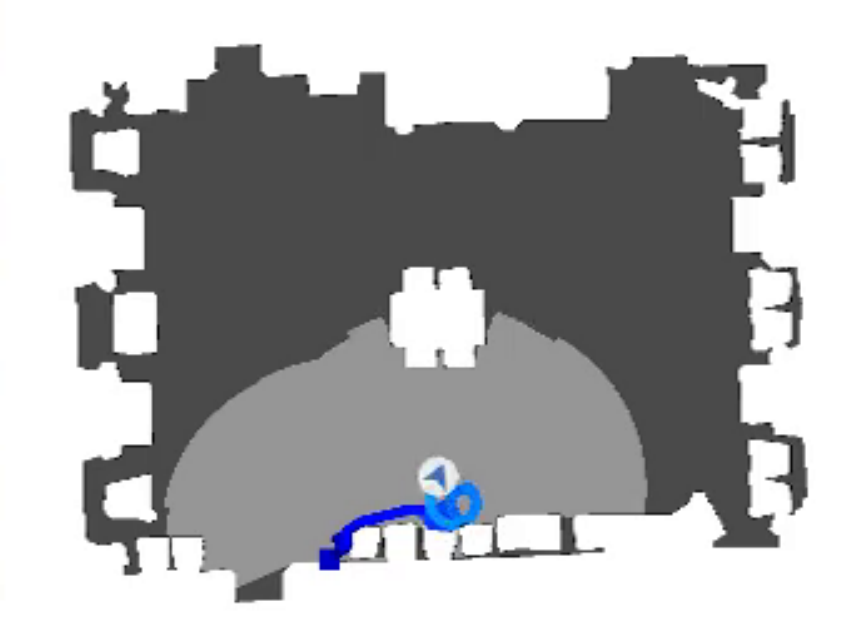}
    }
    \subfigure[\centering Attacked Policy:
    Succ = 0.0,\newline 
    SPL = 0.0,
    Reward = -14.38]{
        \includegraphics[width = 0.22\linewidth]{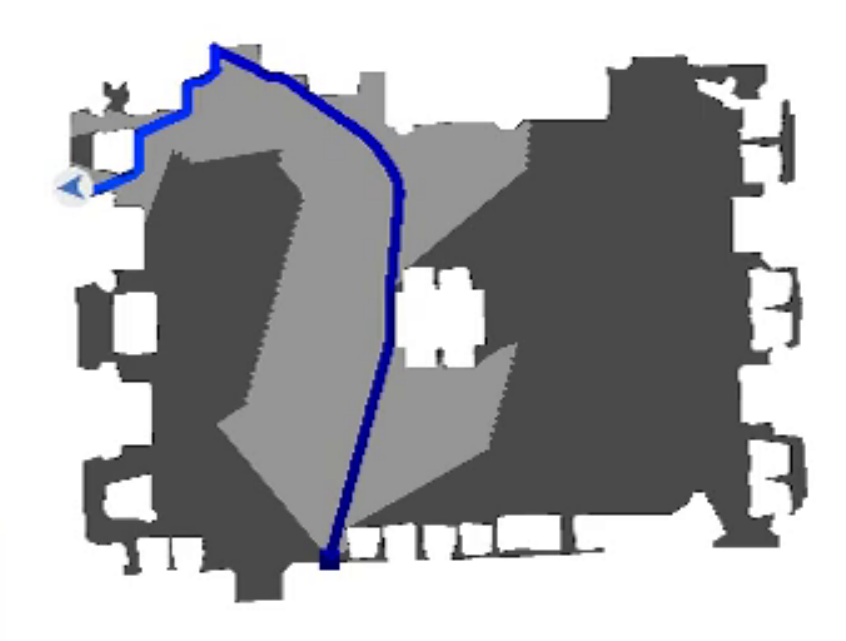}
    }
    \vspace{-0.3cm}
    \caption{Visualization of trajectories sampled by the victim policy and the attacked policy via Reward UAP on the PointGoal task in the Habitat-test dataset. The \textcolor{blue}{blue line} is the trajectory of the Embodied agent and we also report the trajectory's Succ, SPL, and Reward.}
    \label{fig:n}
    \vspace{-0.3cm}
\end{figure*}


\subsection{Environment setups}
\textbf{Environments and Tasks.}
For evaluating the robustness of agents in Embodied Vision Navigation tasks, we choose the popular platform Habitat~\cite{savva2019habitat} with the PointGoal task~\cite{anderson2018evaluation}. Following previous work~\cite{savva2019habitat}, our experiments are based on the Habitat-test dataset~\cite{savva2019habitat}, the Gibson dataset~\cite{xia2018gibson}, and the MatterportP3D (MP3D) dataset~\cite{chang2017matterport3d}. 

\textbf{Victim Policies.} To investigate the vulnerability of Embodied Vision Navigation, we choose the stochastic---as mentioned in Sec.~\ref{sec: baseline uap}---victim policies~\cite{savva2019habitat} based on Proximal Policy Optimization (PPO)~\cite{schulman2017proximal}, which is a classic reinforcement learning algorithm, and different vision sensors like RGB input (RGB) and depth input (Depth). The details of the victim policy can be found in habitat-baseline~\cite{savva2019habitat}. All these agents are officially released pre-trained models~\cite{savva2019habitat} with competitive performance in the PointGoal task. 

\textbf{Baselines.}
We compare our Consistent Attack, including Trajectory UAP and Reward UAP, with standard UAP method~\cite{moosavi2017universal} mentioned at Sec.~\ref{sec: baseline uap}, which is a direct extension of UAP to Embodied Vision Navigation by treating all input images mutually independent. As shown in Eq.~\ref{eq:stochastic}, standard UAP for image classification aims to find a universal disturbance to mislead the classifier on most input images. For the implementation of standard UAP, we sample trajectories with the victim policy and use all observations as the dataset for calculating the disturbance.


\begin{table}[h]
   \centering
    \begin{tabular}{cccccc}
    \toprule
    Adversary & $m$ & $\eta$ & Rew & Succ & SPL\\
    \midrule
    None &-&-& 6.73 & 0.93 & 0.81 \\
    \hline
    UAP      &  5 & 0.03 & -0.34 & \textbf{0.48} & \textbf{0.39} \\
    Reward UAP & 5 & 0.03 & \textbf{-0.75} & \textbf{0.49} & \textbf{0.38} \\
    Trajectory UAP & 5 & 0.03 & -0.63 & \textbf{0.49} & \textbf{0.39} \\
    \hline
    UAP & 10 & 0.03 & -0.50 & 0.51 & 0.38 \\
    Reward UAP & 10 & 0.03 & \textbf{-1.57} & \textbf{0.43} & \textbf{0.28} \\
    Trajectory UAP & 10 & 0.03 & -1.22 & 0.45 & 0.32 \\    
    \hline
    UAP & 15 & 0.03 & -0.60 & 0.49 & 0.36 \\
    Reward UAP & 15 & 0.03 & \textbf{-4.15} & \textbf{0.16} & \textbf{0.12} \\
    Trajectory UAP & 15 & 0.03 & -1.11 & 0.47 & 0.32 \\    
    \bottomrule
    \end{tabular}
    \caption{Performance of the adversary with different numbers of training trajectories $m$ tested on the Habitat-test dataset with depth model. We report the average of reward, episode success rate, and SPL. For each $m$, numbers within 5 percent of the maximum in every individual environment are \textbf{bold}.}
    \label{table_2}
    \vspace{-0.4cm}
\end{table}

\textbf{Evaluation.}
To show the effectiveness of the Consistent Attack methods, we compare our methods with UAP. For a fair comparison, we choose a similar setup in classical UAP~\cite{moosavi2017universal}. In particular, we evaluate the perturbation in $l_2$-norm and set $\eta = \frac{\epsilon}{\sqrt{d}} = 0.5$ and 0.03 for RGB sensor and depth sensor respectively, following the traditional setting in classical UAP works~\cite{moosavi2017universal}. Besides, we evaluate the attacked agent at every task separately, e.g., $100$ episodes in the Habitat-test dataset, $994$ episodes in the Gibson dataset, and $495$ episodes in the Matterport3D dataset. At each episode, the agent takes up to $500$ actions to find the navigable path following the setting in habitat~\cite{savva2019habitat}. For each adversarial attack method, we first evaluate the average reward of the attacked policy, and we also measure the attacked policy's performance via the Episode Success Rate (Succ) as well as the Success weighted by Path Length (SPL). 




\subsection{Results}
\label{sec: Results}

\textbf{Evaluation of Consistent Attack.}  
For all three tasks (Habitat-test, Gibson, MP3D) and the victim PPO model with different sensors (RGB, Depth), We report the victim policy's performance and its performance under different adversaries (UAP, Reward UAP, Trajectory UAP). Here all attack methods only sample 5 trajectories to calculate noises. For each experiment, we report it under three different metrics (reward, episode success rate, SPL). As shown in Table.~\ref{table_1}, all these adversarial attack methods can reduce the performance in all tasks, especially in more complicated tasks like Gibson. This observation indicates that current Embodied Vision Navigation models are vulnerable to adversarial noises. Moreover, compared to UAP, our consistent attack methods, including Reward UAP and Trajectory UAP, can significantly reduce the victim's performance under all three metrics because we consider the system dynamics and thus optimize a better objective of the noises.

In almost all tasks tested on the Gibson and MP3D datasets under RGB and Depth sensors, Trajectory UAP outperforms UAP and Reward UAP in reward, Succ, and SPL. It matches our analyses that using goal signal to estimate the disturbed Q function is more stable when the trajectory number for calculating the noise is limited in Embodied Vision Navigation task. Besides, Trajectory UAP almost decreases the Episode Success Rate to zero in MP3D-Depth, which shows that there exist serious potential risks for applying Embodied Vision Navigation methods to the real world. In Fig.~\ref{fig:n}, we plot some trajectories, with their Succ, SPL, and Reward, of the victim policy and the attacked policy. As shown here, our Consistent Attack can significantly reduce the reward of the victim policy and mislead the Embodied agent to a place far away from the goal.

\textbf{Ablation Study of the Sampling Number $m$.} In this part, we analyze how the number of trajectories $m$ used for calculating noises affects different algorithms, including UAP, Reward UAP, and Trajectory UAP. Due to the limit of computational costs, we evaluate our methods in the Habitat-test~\cite{savva2019habitat} dataset with the depth sensor. As shown in Table.~\ref{table_2}, when the number of training trajectories $m$ increases, Reward UAP's attack capacity significantly improves and is the strongest adversarial attack under all metrics. It indicates that with the increment of sample size, reward UAP can get a more accurate estimation of the gradient $\nabla_{\delta} J_{\delta}$ and thus become a stronger adversary.

\section{Conclusion}
In this work, for evaluating the robustness of existing Embodied Vision Navigation methods, we consider a reasonable setting where the adversary adds a constant perturbation to every observation. We first extend UAP framework under MDP setting and propose $\delta$-MDP to formalize this problem. Based on analyzing properties of $\delta$-MDP, we propose two novel Consistent Attack methods, named Reward UAP and Trajectory UAP, by considering the disturbed state-action distribution and the disturbed Q function. In experiments, we effectively and efficiently mislead the agents in Habitat, which indicates the potential risk to apply these embodied agents to the real world. We hope our study can encourage more future work for designing more robust and reliable Embodied Vision Navigation agents.



\bibliography{main}


\end{document}